\renewcommand{\c}[1]{\ensuremath{\EuScript{#1}}}
\renewcommand{\b}[1]{\ensuremath{\mathbb{#1}}}
\providecommand{\eps}{{\varepsilon}}%
\newcommand{\bw}{{\ensuremath{\bf w}}}
\newcommand{\bu}{{\ensuremath{\bf u}}}
\newcommand{\bx}{{\ensuremath{\bf x}}}
\newcommand{\norm}[1]{\left\lVert #1 \right\rVert}
\newcommand{\R}{\b{R}}
\newcommand{\p}[1]{\left(#1\right)}
\newcommand{\pb}[1]{\left[#1\right]}
\newcommand{\pA}[1]{\left\langle#1\right\rangle}
\theoremstyle{plain}
\newtheorem{theorem}{Theorem}
\newtheorem{cor}{Corollary}
\newtheorem{lemma}{Lemma}
\newcommand{\cc}[2]{
\pgfmathsetmacro{\PercentColor}{100 * (#1 - #2)/ (100 - #2)}
\xdef\PercentColor1{\PercentColor}
\cellcolor{Green!\PercentColor1!BrickRed} {\scriptsize #1}
}
\newcommand{\papertitle}{Learning In Practice: Reasoning About Quantization}
\newcommand{\decorateText}[3]{} % This is the submission version
\title{\papertitle}
\author{Annie Cherkaev}
\author{Waiming Tai}
\author{Jeff Phillips}
\author{Vivek Srikumar}
\affil{School of Computing, University of Utah}
\date{}
\begin{document}

\maketitle

\begin{abstract}
There is a mismatch between the standard theoretical analyses of statistical machine learning and how learning is used in practice. The foundational assumption supporting the theory is that we can represent features and models using real-valued parameters. In practice, however, we do not use real numbers at any point during training or deployment.  Instead, we rely on discrete and finite quantizations of the reals, typically floating points.  In this paper, we propose a framework for reasoning about learning under arbitrary quantizations.  Using this formalization, we prove the convergence of quantization-aware versions of the Perceptron and Frank-Wolfe algorithms.  Finally, we report the results of an extensive empirical study of the impact of quantization using a broad spectrum of datasets.
\end{abstract}

\section{Introduction}
\label{sec:intro}

Machine learning abounds with theoretical guarantees (e.g., convergence of algorithms) which assume we work with real numbers. However, in practice, every instantiation of the algorithms necessarily uses discrete and finite approximations of real numbers; our hardware is discrete and finite. Such representations are sparse in the space of real numbers. As a consequence, most real numbers are not precisely represented. Does this fact pose problems for learning?

On commodity hardware, learning algorithms typically use 64, 32, or more recently 16 bit floating point numbers.
These approximations are dense enough that, empirically, the guarantees appear to hold. For example, with a $b$ bit representation, $d$-dimensional linear models exist in spaces with $2^{bd}$ distinct points. Typical values of $b$ give sufficiently close approximations to $\R^d$, and learning is reliable, especially after data pre-processing such as normalization.
Floating points are convenient, ubiquitous and portable. Yet, we argue that, machine learning applications present both the need {\em and} the opportunity to rethink how real numbers are represented. With 64 bits and $d$ dimensions, we can distinguish $2^{64d} \p{\approx 10^{19d}}$ points; but learning may not need such high fidelity. The possibility of guaranteed learning with {\em much} coarser numeric representations such as the ones in figure \ref{fig:lattice}, and perhaps even customized ones that are neither fixed nor floating points, could allow for more power efficient customized hardware for learning.

 \begin{figure}[t]
  \begin{center}
    \includegraphics[width=0.5\textwidth]{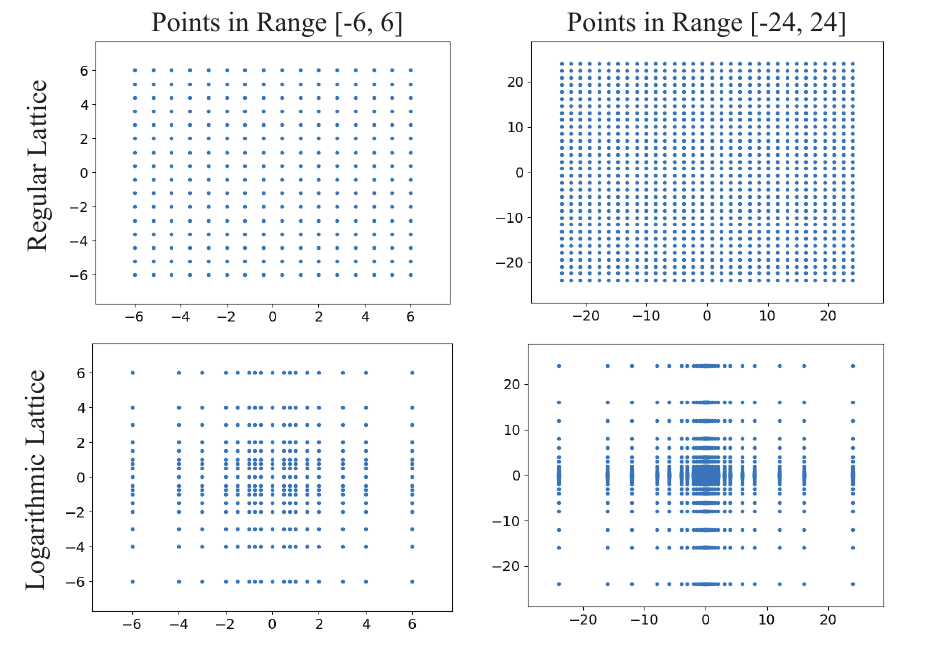}
  \end{center}
\caption{Precisely representable points in the ranges $[-6,6]$ (left) and $[-24, 24]$ (right) under different quantizations. In the top images, these points are spaced in a regular lattice, akin to fixed points. The bottom images show a logarithmically spaced set of points, similar to floating points. Other than these two well studied numeric representations, we could also choose a quantization that is customized to the task, domain or hardware at hand.
}
\label{fig:lattice}
\end{figure}

Moving away from general purpose numeric representations can vastly impact various domains where learning is making inroads. For example, embedded systems and personal devices are resource and power limited.
% , but they increasingly deployed to collect, learn from and manipulate data. Conserving power is not only important for good user experience in personal devices, but also paramount for compute nodes deployed as part of sensor networks in remote locations.
Datacenters are not resource limited, but the sheer scale of data they operate upon demands sensitivity to the cost of powering them. Both applications % often have specialized and well-characterized datasets and
can reap power saving benefits from custom hardware with efficient custom numeric representations. However, using ad-hoc representations risks unsound learning: specializing hardware for learning requires guarantees. %make sense to justifiy the cost of creating custom hardware without having guarantees, or even any method for reasoning about, convergence.

Are the standard fine-grained representations needed for guaranteed learning? Can we learn with coarser quantization of the feature and parameter space? For example, figure \ref{fig:lattice} shows examples of different quantizations of a two-dimensional space. While we seek to represent the (bounded) infinite set on the plane, only the points shown in the figure actually exist in our representation. Each representation is effectively blind to the spaces between the representable points --- both features and parameters are constrained to the quantized set. What does it mean to learn over this quantized set?

We present a framework for reasoning about learning under any arbitrary quantization that consists of {\em atoms} -- the finite subset of $\R^d$ which can be represented precisely. Our framework includes not only floating and fixed point representations of numbers, but custom numeric representations as well. Both examples and learned parameters are rounded to the nearest atom. % Then, instead of reasoning over vectors of discretized values, we reason about learning over this finite set of representable values.
%Our definition frees us from geometric assumptions that we are bound by when we reason about vectors, such as that all dimensions must represent the same intervals.
We formalize the operators needed over the atoms to define several learning algorithms.

We study two broad families of learning algorithms under our framework. We present a quantization-aware Perceptron mistake bound that shows that, the Perceptron algorithm, despite quantization, will converge to a separating set of (quantized) parameters. We also show convergence guarantees for the Frank-Wolfe algorithm defined over the quantized representations.
Finally, we present a set of empirical results on several benchmark datasets that investigate how the choice of numeric representation affects learning. Across all datasets, we show it is possible to achieve maximum accuracy with much fewer number of atoms than mandated by standard fixed or floating points. Furthermore, we show that merely adjusting the number of bits that we allow for our representations is not enough. The actual points that are precisely representable---i.e., the choice of the atoms---is equally important, and even with the same number of bits, a poor choice of the atoms can render datasets unlearnable.

In summary, we present:
\begin{enumerate}
\item A general framework for reasoning about learning under quantization,
\item theoretical analysis of a family of algorithms that can be realized under our framework, and,
\item experiments with Perceptron on several datasets that highlights the various effects of quantization.
\end{enumerate}

\section{A Framework for Formalizing Quantization}
\label{sec:framework}

% q: which classes of learning algo do we want to make claims for?

In this section, we will define a formalization of quantized representations of real vectors which not only includes floating and fixed points, but is also flexible enough to represent custom quantization. To do so, let us examine the operations needed to train linear classifiers, where the goal is to learn a set of parameters $\bw \in \R^d$. As a prototypical member of this family of learning algorithms, consider the Perceptron algorithm, the heart of which is the Perceptron update: For an example represented by its features $\bx \in \R^d$ with a label $y \in \{-1, +1\}$, we check if the inner product $\langle \bw, \bx\rangle$ has the same sign as $y$. If not, we update the weights to $\bw + y\bx$.
%\jeff{$\rho$ is overloaded also used for restoration.}
%
The fundamental currency of such an algorithm is the set of $d$-dimensional vectors which represent feature vectors $\bx$ and the learned classifier $\bw$.% that best separates the data according to their labels.

%The binary linear classification problem can be phrased as searching for a d+1-dimensional hyperplane which seperates all $d$-dimensional data points with one label, from all $d$-dimensional data points with the other label. We will say that the data truely exists $d$-dimensional real-valued spaced, and the true seperating classsifier $w$ also truely exists in d+1-dimensional real-valued space. Our goal is to learn $w$ given a dataset. However, we cannot operate on real-valued vectors.

% Using vectors, however, is unnecessarily restrictive

We define a quantization via a finite set $A$ of $m$ {\em atoms} $\{a_1, a_2, \cdots, a_m\}$. Each atom precisely represents a unique point in $\R^d$ --- e.g., the points in the examples in figure \ref{fig:lattice}.  Conceptually, we can now think of the learning algorithm as operating on the set $A$ rather than $\R^d$.
%In practice, it is likely that one would want to have n be a power of two if it's implemented on hardware that relies on bits.
Instead of reasoning about learning over vectors, we need to reason about learning over this finite set of representable values. Our abstraction frees us from implicit geometric assumptions we may make when we think about vectors, such as requiring that each dimension contain the same number of representable points. This allows us to model not only the familiar fixed and floating point representations, but also task-dependent custom numeric representations which contain irregularly spaced points.
%
% We describe how to specify a quantization, and how this enables us to define binary linear classification algorithms such as Perceptron.

Given a set of atoms, we now need to define the operators needed to define learning algorithms. To support algorithms like the Perceptron algorithm,  we need three operations over set of atoms --- we need to be able to
\begin{inparaenum}[(a)]
\item compute sign of the dot product of two atoms,
\item add two atoms to produce a new atom, and
\item multiply an atom by a real number. % \footnote{To simplify discourse, we will assume that we can represent a small number of reals with high fidelity. While this is not strictly necessary for the mistake bound to hold, it admits notational convenience.}
\end{inparaenum}
Note that, despite atoms being associated with real vectors, we cannot simply add or scale atoms because the result may not be representable as an atom.

To provide a formal basis for these operators, we will define two functions that connect the atoms to the real vector space. For any atom $a \in A$, we will refer to the associated point in $\R^d$ as its {\em restoration}. The restoration $r : A \to \R^d$ is maps atoms to their associated real valued points.
For brevity, if it is clear from the context, we will simplify notation by treating atoms $a$ as vectors via an implicit use of the restoration function.
For any point that is not precisely representable by a set $A$, we need to be able to map it to one of the atoms. We will refer to the function $q: \R^d \to A$ that maps any point in the vector space to an atom as the {\em quantization} of the point.

Thus, we can define a quantization of $\R^d$ via the triple $(A, q, r)$ comprising of the set of atoms $A$, a quantization function $q$ and a restoration function $r$.
% The set of atoms $A$, a quantization function $q$ and a restoration function $r$ define a specific quantization of $\R^d$.
The functions $q$ and $r$ give us natural definitions of the {\em intended} semantics of the operations described above and will drive the analysis in \S\ref{sec:bounds}. Note that while these functions formally define a quantization, its implementation cannot explicitly use them because the space $\R^d$ is not available.
Our formalization includes regular lattices such as fixed point, logarithmic lattices such as floating point, as well as more general lattices. For instance, the points in the regular or logarithmic lattices of figure \ref{fig:lattice} can be taken as the set of atoms $A$.

Most of $\R^d$ --- which is infinite --- is simply too far from any atom to be useful, or even encountered during training.  So, we restrict our discussion to a continuous subset $M \subset \R^d$ that contains points of interest; for instance, $M$ could be a ball of sufficiently large radius centered at the origin.  We will assume that all atoms are in $M$.
% , ignoring the possibility of atoms being restored outside $M$.

Since atoms are precisely representable, restoring them via $r$ induces no error. That is, for any $a \in A$, we have $q(r(a)) = a$. The reverse is not true; restoring the quantization of a point $\bx \in M$ via $r(q(\bx))$ need not preserve $\bx$. Intuitively, the gap between $r(q(\bx))$ and $\bx$ should not be arbitrarily large for us to maintain fidelity to the reals. To bound this error, we define the {\em error} parameter $\delta$ of a quantization as
\begin{equation}
  \label{eq:1}
  \delta = \max_{\bx \in M} \|\bx - r(q(\bx))\|.
\end{equation}
Defining the quantization error by a single parameter $\delta$ is admittedly crude; it does not exploit potential variable density of atoms (e.g., with logarithmic lattices in figure \ref{fig:lattice}).  However, it allows for a separation from the geometry of the quantization.  For every atom $a \in A$, we could associate a {\em quantization region} $Q_a \subset M$ such that all points in $Q_a$ are quantized to $a$. That is, $Q_a = \{\bx \in M \mid q(\bx) = a\}$. The definition of $\delta$ bounds the diameter of the quantization regions. In the simplest setting, we can assume $Q_a$ is defined as the Voronoi cell of $a$, a convex subset of $M$ that is closer to $a$ than any other atom.  % In this setting, just describing the set of $m$ Voronoi cells in $d$-dimensions requires $\Omega(m^{\lfloor d/2 \rfloor})$ space.\vs{Is the last sentence needed?}

%\vs{Maybe a very small diagram here that shows atoms and voronoi cells}

\section{Quantization-Aware Learning}
\label{sec:bounds}

In this section, we will look at theoretical analyses of various aspects of quantization-aware learning. First, we will show that under quantization, it may be possible for the error induced by a separating hyperplane to be affected by nearly all the quantization regions.
We will then show that class sample complexity bounds hold, and then analyze the Perceptron algorithm. Finally, we will show an analysis of the Frank-Wolfe algorithm. In both algorithms, our results show that, for learning to succeed, the margin $\gamma$ of a dataset should be sufficiently larger than the quantization error $\delta$.

\subsection{Hyperplanes and Quantization}
\label{sec:lemma1}
In general, collected data may be at a finer resolution than the set of atoms, but we argue that ultimately it is natural to study the computation of classifiers (here, linear separators) on data $X$ that is a subset of $A$.  To do so, we can analyze how unquantized separators interact with the atoms. Intuitively, only quantization regions through which the separator passes can contribute to the error. How many such regions can exist?
While separators for Voronoi cells are studied, but finding separators among Voronoi cells that do not intersect too many cells is known to be difficult~\citep{bhattiprolu2016separating}. For large $d$, almost every atom will be affected by any separator.
% result described a complex balanced separator that inserted only $O(m^{1-1/d})$ additional points
% .  \vs{I don't understand. Why insert additional points?}
%
%http://sarielhp.org/p/13/s_voronoi/
We formalize this for a specific, illustrative setting.

\begin{lemma}
Consider a domain $M \subset \R^d$ that is a cube centered at the origin for some constant $d$. Suppose the set of atoms $A$ correspond to an axis-aligned orthogonal grid of size $m$.   For any atom $a \in A$, let the quantization region $Q_a$ be its Voronoi region.  Then, any linear separator that passes through the origin will be incident to $\Omega(m^{1-1/d})$ quantization regions.
\label{lemma:linear-separator-incidence}
\end{lemma}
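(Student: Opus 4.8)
The plan is to reduce the statement to counting how many cells of an axis-aligned grid of cubes a central hyperplane can cut through, and to show this is at least a constant fraction of one ``transverse slice'' of the grid. After rescaling, I would take $M = [-N,N]^d$ with the atoms at the integer points, so that $n := m^{1/d}$ atoms lie along each axis and the Voronoi region $Q_a$ of each atom is the unit axis-aligned cube centered at it. (This is the standard fact that the Voronoi diagram of a regular orthogonal grid is the partition of $M$ into congruent axis-aligned boxes.) The quantity to lower bound is then the number of such cubes met by the hyperplane $H = \{\bx : \langle \bw, \bx\rangle = 0\}$, and the target $\Omega(m^{1-1/d})$ is exactly $\Omega(n^{d-1})$, the size of one $(d-1)$-dimensional slice.

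The key step is to orient the argument along the coordinate in which $\bw$ is ``steepest.'' I would relabel coordinates so that $|w_1| = \max_j |w_j|$; then $|w_j|/|w_1| \le 1$ for every $j$, and on $H$ we may solve $x_1 = -\frac{1}{w_1}\sum_{j\ge 2} w_j x_j$, exhibiting $H$ as a graph over the remaining coordinates with each partial slope bounded by $1$ in absolute value. I would then partition the cubes of $M$ into $n^{d-1}$ \emph{columns} running parallel to the $x_1$-axis, one for each choice of transverse grid coordinates $(c_2,\dots,c_d)$.

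To count, I would show that every column whose center satisfies $|c_j| \le N/(d-1)$ is hit by $H$ inside $M$. For such a column, set $x_1^\star = -\frac{1}{w_1}\sum_{j\ge 2} w_j c_j$; the bound on the slopes gives $|x_1^\star| \le \sum_{j\ge 2}|c_j| \le N$, so the point $(x_1^\star, c_2,\dots,c_d)$ lies on $H$ and inside $M$, hence in some cube of that column. Distinct columns contain disjoint cubes, so these incidences are distinct, and the number of admissible columns is $(2\lfloor N/(d-1)\rfloor + 1)^{d-1} = \Omega\big((N/(d-1))^{d-1}\big)$. Since $d$ is a constant and $N = \Theta(m^{1/d})$, this is $\Omega(n^{d-1}) = \Omega(m^{1-1/d})$, as claimed.

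The step I expect to be the crux is guaranteeing that the hyperplane remains inside the bounded cube $M$ for a constant fraction of the columns rather than escaping through the top or bottom face after only a few: a hyperplane nearly parallel to the $x_1$-axis would exit $M$ quickly in that direction. Choosing $x_1$ to be the steepest coordinate is precisely what caps every transverse slope by $1$ and forces $|x_1^\star| \le N$ on the central block of columns, which is what makes the count go through. A minor point to keep honest is the bookkeeping that $m$ counts \emph{all} atoms, so the per-axis count is $m^{1/d}$ and a full transverse slice has $m^{1-1/d}$ columns.
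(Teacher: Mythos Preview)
Your argument is correct, but it takes a different route than the paper. The paper works directly on the hyperplane $F$: it observes that every Voronoi cell has diameter $\sqrt{d}/m^{1/d}$, lays down a $(d-1)$-dimensional grid \emph{on} $F$ with spacing equal to that diameter, and notes that no two of these $\Omega(m^{1-1/d})$ grid points can share a cell. Your argument instead projects $F$ along its steepest coordinate direction, partitions the cube into $n^{d-1}$ columns parallel to that axis, and shows that a constant fraction of columns (those with transverse coordinates $|c_j|\le N/(d-1)$) must be pierced by $F$ inside $M$.

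The trade-off is this: the paper's diameter-packing argument uses only that each $Q_a$ has diameter $O(m^{-1/d})$, so it immediately generalizes to any quantization with bounded-diameter regions (which the paper remarks on). Your column-stabbing argument is more tied to the product structure of the axis-aligned grid, but it handles the ``stay inside $M$'' issue very cleanly via the steepest-coordinate choice, whereas the paper is somewhat informal about ensuring its on-$F$ grid actually fits inside $F\cap M$. Both lose a $d$-dependent constant that is absorbed since $d$ is fixed.
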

\begin{proof}
  Without loss of generality, let the side length of the cube $M$ be $1$, so the side length of each quantization region $Q_a$ is $1/m^{1/d}$. Then, the diameter of each $Q_a$ is $\sqrt{d}/m^{1/d}$.

Now, consider a linear separator $F$ which is a $(d-1)$-dimensional subspace.
%Let it be spanned by $d-1$ orthogonal vectors $\{u_1, u_2, \ldots, u_{d-1}\}$.
Use any orthogonal basis spanning $F$ to define a $(d-1)$-dimensional grid within $F$. Place a set of $\Omega((m^{1/d})^{d-1}) = \Omega(m^{1-1/d})$ grid points on $F$ so that, along each axis of the basis, they are a distance of $\sqrt{d}/m^{1/d}$ apart. %\vs{Todo: Explain why it is possible to place such a grid.}
No two of these grid points can be in the same $Q_a$ because their separation is at least the diameter of a cell.  Thus, at least $\Omega(m^{1-1/d})$ quantization cells of $A$ must intersect the linear separator.
\end{proof}

%\jeff{WaiMing:  Can you remove the assumption that $d$ is a constant?  I don't think its needed, but I don't see the path to a simple proof that removes it.  This is common in CG, so not too worried about it.}

The bounded diameter of the quantization regions, which plays an important role in this proof, is related to the worst-case error parameter $\delta$.  If these regions are not Voronoi cells, but still have a bounded diameter, the same proof would work. Some quantizations may allow a specific linear separator $F$ not to intersect so many regions, but then other linear separators will intersect $\Omega(m^{1-1/d})$ regions.

\subsection{Sample Complexity}
\label{sec:sample-complexity}

Suppose we have a dataset consisting of training examples (the set $X$) and their associated binary labels $y$. We will denote labeled datasets by pairs of the form $(X,y)$.
The set of training examples $X$ is most likely much smaller than $A$. Lemma \ref{lemma:linear-separator-incidence} motivates % \vs{Why? The discretization error has to be smaller than the margin.}
that we assume that the training examples are precisely representable under the quantization at hand, or have already been quantized; that is, $X \subset A$. This allows us to focus the analysis on the impact of quantization during learning separately. From a practical point of view, this assumption can be justified because we can {\em only} store quantized versions of input features. For example, if features are obtained from sensor readings, notwithstanding sensor precision, only their quantized versions can exist in memory.\footnote{In \S\ref{sec:experiments}, we will show experiments that violate this assumption and achieve good accuracies. Analyzing such situations is an open question.}

Consider a function class (or range space) $(A,\c{F})$ where $\c{F}$ is a function class defining separators over the set of atoms.  For instance, $\c{F}$ could define quantized versions of linear separators or polynomial separators of a bounded degree.
For any set of functions $\c{F}$ defined over the set of atoms, we can define its real extension $\c{F}'$ as the set of functions that agree with functions in $\c{F}$ for all the atoms. That is, $\c{F}' = \{r(F) \mid F \in \c{F}\}$.
Let the VC-dimension of the real extension of the function space $(\R^d, \c{F}')$ be $\nu$.

\begin{lemma}
Consider a labeled set with examples $X \subset A \subset \R^d$ with corresponding labels $y$. Let $\c{F}$ be  a function class such that its real extension  $(\R^d, \c{F}')$ has VC-dimension $\nu$.
Let $(X_1, y_1) \subset (X, y)$ be a random sample from the example set of size $O(\frac{\nu}{\eps} \log \frac{1}{\eps \delta})$ and $(X_2, y_2) \subset (X, y)$ a random sample of size $O(\frac{1}{\eps^2}(\nu + \log\frac{1}{\delta}))$.
Then, with probability at least $1-\delta$,
\begin{enumerate}[nosep]
\item a perfect separator $F_1 \in \c{F}$ on $(X_1,y_1)$ misclassifies at most $\eps$ fraction on $(X, y)$, and,
\item a separator $F_2 \in \c{F}$ on $(X_2,y_2)$ that misclassifies an $\eta$ fraction of points in $(X_2,y_2)$ misclassifies at most an $\eta+\eps$ fraction of points in $(X,y)$.
\end{enumerate}

\end{lemma}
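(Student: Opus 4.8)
The plan is to reduce both claims to classical uniform-convergence results from VC theory, with the only quantization-specific work being a transfer of the VC-dimension bound from the real extension $\c{F}'$ to the discrete class $\c{F}$ acting on $X$. First I would observe that, because $X \subset A$ and the real extension is defined by $\c{F}' = \pc{r(F) \mid F \in \c{F}}$, every $F \in \c{F}$ agrees with its restoration $r(F) \in \c{F}'$ on all of $X$. Hence the set system that $\c{F}$ induces on $X$ is identical to the one induced by $\c{F}'$ on $X$. Since restricting a range space to a subset cannot increase its VC-dimension, the range space $(X, \c{F})$ has VC-dimension at most $\nu$. This is exactly where the assumption $X \subset A$ is needed: it lets us inherit the real-valued VC-dimension even though the learner only ever manipulates atoms.

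Next I would pass to the error range space. For each $F$, let its error set be $E_F = \pc{\bx \in X \mid F \text{ misclassifies } \bx}$, which is the symmetric difference of the region $F$ labels positive with the fixed target region $\pc{\bx \mid y(\bx) = +1}$. Taking the symmetric difference of every range with one fixed set is a bijection of the induced set system, so it preserves shattering; the VC-dimension of $\pc{E_F \mid F \in \c{F}}$ is therefore still at most $\nu$. Both parts of the lemma then become statements about how well a random sample estimates the measures of these error ranges.

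For claim (1) I would invoke the $\eps$-net theorem: a uniform random sample $X_1$ of size $O\p{\frac{\nu}{\eps}\log\frac{1}{\eps\delta}}$ is, with probability at least $1-\delta$, an $\eps$-net for the error range space. A separator $F_1$ that is perfect on $(X_1, y_1)$ has $E_{F_1} \cap X_1 = \emptyset$; if $E_{F_1}$ covered more than an $\eps$ fraction of $(X, y)$, the $\eps$-net property would be violated, so $F_1$ misclassifies at most an $\eps$ fraction. For claim (2) I would instead invoke the $\eps$-approximation (uniform convergence) theorem: a sample $X_2$ of size $O\p{\frac{1}{\eps^2}\p{\nu + \log\frac{1}{\delta}}}$ is, with probability at least $1-\delta$, an $\eps$-approximation, so that for every $F \in \c{F}$ simultaneously the empirical misclassification rate on $(X_2, y_2)$ and the true rate on $(X, y)$ differ by at most $\eps$. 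Applied to $F_2$, whose empirical rate is $\eta$, this bounds its true rate by $\eta + \eps$.

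The main obstacle is not the sampling bounds, which are black-box applications of well-known theorems once the range space is fixed, but making the VC-dimension transfer airtight: one must verify that the discrete class $\c{F}$ and its real extension $\c{F}'$ genuinely induce the same dichotomies on $X$, which rests squarely on $X \subset A$ and on restoration being exact on atoms (so that $q(r(a)) = a$). If data were allowed to lie strictly between atoms, this identification would fail and the argument would require an extra rounding step relating $q(\bx)$ back to $\bx$, presumably paying an additional cost governed by $\delta$.
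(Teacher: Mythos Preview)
Your proposal is correct and follows essentially the same approach as the paper: transfer the VC-dimension bound from $\c{F}'$ to $\c{F}$ via the assumption $X \subset A$ (so that $F$ and $r(F)$ induce identical dichotomies on $X$), and then invoke standard VC sample-complexity results as black boxes. The paper's proof is terser and simply says ``the standard sample complexity bounds apply directly,'' whereas you spell out the $\eps$-net and $\eps$-approximation invocations and the symmetric-difference passage to the error range space; these are welcome clarifications but not a different argument.
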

\begin{proof}
Any $F \in \c{F}$ maps without error into $F' \in \c{F}$ --- i.e., $q(r(F))$ separates the same points as $F$. Then, if $x \in A$ is classified correctly by $F$, then $r(x)$ is classified correctly by $r(F) \in \c{F}'$.  Then, since each $x \in X$ maps to a point $r(x) \in \R^d$, the VC-dimension of $(A, \c{F})$ is also at most $\nu$.  Then, the standard sample complexity bounds~\citep{HW87,VC71,LLS01} apply directly to the quantized versions as claimed.
\end{proof}

\subsection{Quantized Perceptron Bound}
\label{sec:quantized-perceptron-bound}

We next consider the classic Perceptron algorithm on a labeled set $X \subset A$ with each $\bx_i \in X$ labeled with $y_i \in \{-1,+1\}$.   For a linear classifier defined by the normal direction $\bw_t \in A$, a mistake is identified as $y_i \langle r(\bw_t), r(\bx_i) \rangle < 0$, which leads to the  update $$\bw_{t+1} = q\p{r(\bw_t) + y_i r(\bx_i)}.$$

Note that quantization error is only incurred on the last step $\bw_{t+1} = q(r(\bw_t) + y_i r(\bx_i))$ and that
\[
\|r(\bw_{t+1}) - (r(\bw_t) + y_i r(\bx_i))\| \leq \delta.
\]
That is, the new normal direction suffers at most $\delta$ error on a quantized update.

We can adapt the classic margin bound, where we assume that the  data is linearly separable with a margin $\gamma$.

\begin{theorem}[Quantized Perceptron Mistake Bound]
Consider a dataset with examples $X \subset A \subset M \subset \R^d$ where $(X,y)$ has a margin $\gamma$.  Suppose we have a representation scheme whose quantization error is $\delta < \gamma$. Assume every example $\bx \in X$ satisfies $\|r(\bx)\| \leq 1$ and $M$ contains a ball of radius $\sqrt{T} = 1/(\gamma - \delta)$, then after $T$ steps the quantized Perceptron will return $\bw_T$ which perfectly separates the data $(X,y)$.
\end{theorem}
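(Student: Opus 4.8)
The plan is to adapt the classical Block--Novikoff mistake-bound argument, tracking the usual two potentials after each update but charging the quantization error against the effective margin. Fix a unit-norm separator $\bw^*$ witnessing the margin, so that $y_i \langle r(\bx_i), \bw^* \rangle \geq \gamma$ for every example, and initialize $r(\bw_0) = \mathbf{0}$. As in the unquantized case, I would account only for update (``mistake'') steps, since a step with no mistake leaves the iterate unchanged; the goal is to bound the number $T$ of such steps and then conclude that no further mistake is possible.

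First I would lower bound the alignment $\langle r(\bw_t), \bw^* \rangle$. Writing the ideal update as $\tilde\bw_{t+1} = r(\bw_t) + y_i r(\bx_i)$ and the quantization perturbation as $\mathbf{e}_t = r(\bw_{t+1}) - \tilde\bw_{t+1}$ with $\norm{\mathbf{e}_t} \le \delta$, I get $\langle r(\bw_{t+1}), \bw^*\rangle = \langle r(\bw_t), \bw^*\rangle + y_i\langle r(\bx_i), \bw^*\rangle + \langle \mathbf{e}_t, \bw^*\rangle \ge \langle r(\bw_t),\bw^*\rangle + \gamma - \delta$, using the margin together with Cauchy--Schwarz on the error term and $\norm{\bw^*} = 1$. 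Iterating over $T$ updates yields $\langle r(\bw_T), \bw^*\rangle \ge T(\gamma-\delta)$, which, since $\delta < \gamma$, is a genuinely positive and growing lower bound.

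Next I would upper bound $\norm{r(\bw_T)}$. On a mistake the ideal cross term is nonpositive, $y_i\langle r(\bw_t), r(\bx_i)\rangle < 0$, so with $\norm{r(\bx_i)} \le 1$ the ideal step obeys $\norm{\tilde\bw_{t+1}}^2 \le \norm{r(\bw_t)}^2 + 1$; incorporating the quantization $r(\bw_{t+1}) = q(\tilde\bw_{t+1})$ through $\norm{\mathbf{e}_t}\le\delta$, I would aim to keep the per-mistake growth of $\norm{r(\bw_t)}^2$ essentially equal to $1$, giving $\norm{r(\bw_T)} \le \sqrt{T}$ (subject to the confinement argument below). Combining the two bounds by Cauchy--Schwarz gives $T(\gamma - \delta) \le \langle r(\bw_T), \bw^*\rangle \le \norm{r(\bw_T)}\,\norm{\bw^*} \le \sqrt{T}$, which rearranges to $\sqrt{T} \le 1/(\gamma - \delta)$, i.e. $T \le 1/(\gamma-\delta)^2$. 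Hence no further mistake can occur after $\p{\gamma-\delta}^{-2}$ updates, and $\bw_T$ separates $(X,y)$. The hypothesis that $M$ contains a ball of radius $\sqrt{T} = 1/(\gamma-\delta)$ is exactly what makes this sound: it guarantees that every iterate, whose restoration has norm at most $\sqrt{T}$, actually lies in $M$, so that the global bound $\delta$ (a maximum over $M$) legitimately applies at each update.

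I expect the delicate step to be the norm upper bound rather than the alignment bound. In the unquantized proof the squared norm grows by at most $\norm{r(\bx_i)}^2 \le 1$ per mistake because the cross term is negative; under quantization the perturbation $\mathbf{e}_t$ contributes an extra cross term $2\langle \tilde\bw_{t+1}, \mathbf{e}_t\rangle$ that scales with $\norm{\tilde\bw_{t+1}}$, and naively this threatens the clean $\norm{r(\bw_T)} \le \sqrt{T}$ estimate. The key to taming it is precisely the confinement of the iterates to the radius-$\sqrt{T}$ ball, which caps $\norm{\tilde\bw_{t+1}}$ and keeps the accumulated perturbation subdominant to the margin gain $\gamma - \delta$; making this bookkeeping tight, and verifying by induction that the iterates never leave the ball before termination, is where the real work lies.
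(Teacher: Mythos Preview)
Your plan is exactly the paper's argument: the Block--Novikoff two-potential proof with the quantization error charged against the margin in the alignment bound, then Cauchy--Schwarz to obtain $T(\gamma-\delta)\le\sqrt{T}$ and hence $T\le 1/(\gamma-\delta)^2$; the confinement remark matches the paper's final paragraph verbatim.

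The only divergence is in the step you flag as delicate. The paper does \emph{not} do the cross-term bookkeeping you anticipate for $2\langle\tilde\bw_{t+1},\mathbf e_t\rangle$; it simply writes $\|r(\bw_{t+1})\|^2=\langle r(\bw_t)+y_ir(\bx_i),\,r(\bw_t)+y_ir(\bx_i)\rangle\le\|r(\bw_t)\|^2+1$, i.e., it identifies $r(\bw_{t+1})$ with the ideal update $\tilde\bw_{t+1}$ in the norm half of the argument and ignores the quantization perturbation there entirely. So your worry is well placed, but the paper's own proof sidesteps it by assertion rather than by the induction you sketch; your more careful treatment would be a refinement of, not a departure from, the published argument.
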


%\section{Proof of the Quantized Perceptron Mistake Bound}
%This appendix presents the proof of convergence of the quantized Perceptron.
\begin{proof}
First, we argue $\|r(\bw_t)\|^2 \leq t$ since at each step $\|r(\bw_t)\|^2$ increases by at most $1$.  That is, in step $t$ with misclassified $(\bx_i,y_i)$ we have
\begin{align*}
  \begin{split}
\|r(\bw_{t+1})\|^2
  & = \langle r(\bw_t) + y_i r(\bx_i), r(\bw_t) + y_i r(\bx_i) \rangle                                                          \\
  & = \langle r(\bw_t), r(\bw_t)\rangle + (y_i)^2 \langle r(\bx_i), r(\bx_i) \rangle \\
  &\quad  + 2 y_i \langle r(\bw_t), r(\bx_i) \rangle \\
  & \leq \|r(\bw_t)\|^2 + 1 + 0.
\end{split}
\end{align*}
Second, we argue that with respect to the max-margin classifier $\bw^* \in \R^d$, with $\|\bw^*\| = 1$, we have $\langle \bw^*, r(\bw_t) \rangle \geq t (\gamma-\delta)$.  On step $t$ with misclassified $(\bx_i,y_i)$, it increases by at least $\gamma-\delta$:
\begin{align*}
  \langle r(\bw_{t+1}), \bw^* \rangle
  & = \langle r(q(r(\bw_t) + y_i r(\bx_i))), \bw^* \rangle                                                                      \\
  & \geq \langle r(\bw_t) + y_i r(\bx_i), \bw^* \rangle - \delta \|\bw^*\|                                                      \\
% & = \langle r(\bw_t) + y_i r(\bx_i), \bw^* \rangle - \delta                                                                   \\
  & = \langle r(\bw_t), \bw^* \rangle + y_i \langle r(\bx_i), \bw^* \rangle - \delta                                            \\
  & \geq \langle r(\bw_t), \bw^* \rangle + \gamma - \delta.
\end{align*}
Combining these together
$
t (\gamma - \delta) \leq \langle w^*, r(w_t) \rangle \leq \|r(w_t)\| \leq \sqrt{t},
$
and hence $t \leq 1/(\gamma - \delta)^2$, as desired.  If $t$ is larger, then the second claim is violated, and hence there cannot be another mis-classified point.

Also, note that for this to work, $\bw_t$ must stay within $M$.  Since after $t$ steps $\|\bw_t\| \leq \sqrt{t}$, then over the course of the algorithm $\bw_t$ is never outside of the ball of radius $\sqrt{T} = 1/(\gamma - \delta)$, as provided.
\end{proof}
%%% Local Variables:
%%% mode: latex
%%% TeX-master: "nips_2018"
%%% End:

The theorem points out that if the margin of the data is larger than the quantization error, then the mistake bound is $O(\frac{1}{(\gamma-\delta)^2})$. In other words, with a coarse quantization, we may have to pay the penalty in the form of more mistakes. Note that the above theorem does not make any assumptions about the quantization, such as the distribution of the atoms. If such assumptions are allowed, we may be able to make stronger claims, as the following theorem shows.

\begin{theorem}
When $A$ forms a lattice restricted to $M$ (e.g., the integer grid), the origin is in $A$, the data set $X \subset A$, and $M$ contains a ball of radius $\sqrt{T}$, then after $T$ steps, the infinite precision Perceptron's output $w^\infty_T$ and the quantized Perceptron's output $\bw_T$ match in that $r(\bw_T) = \bw^\infty_T$.
\end{theorem}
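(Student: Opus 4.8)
The plan is to prove, by induction on the number of update steps $t$, that the two algorithms produce identical iterates, i.e.\ $r(\bw_t) = \bw^\infty_t$ for every $t \le T$, where $\bw^\infty_t$ denotes the infinite-precision weight vector. The structural fact driving the whole argument is that a lattice is closed under addition and under negation: the sum of two lattice points is a lattice point, and so is $-1$ times a lattice point. The base case is immediate, since both algorithms are initialized at the origin, which is assumed to lie in $A$, giving $r(\bw_0) = \bw^\infty_0$.

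For the inductive step, I would assume $r(\bw_t) = \bw^\infty_t$ and that this common vector is a lattice point. Because the two iterates agree exactly, the mistake test $y_i \langle r(\bw_t), r(\bx_i)\rangle < 0$ evaluates identically for both runs; hence they make a mistake on precisely the same examples in the same order, and it suffices to show that a single update preserves the equality. On a mistake with $(\bx_i, y_i)$, the infinite-precision update forms $\bw^\infty_{t+1} = \bw^\infty_t + y_i r(\bx_i)$. Since $\bx_i \in X \subset A$, the point $r(\bx_i)$ is a lattice point, and as $y_i \in \{-1,+1\}$ the vector $y_i r(\bx_i)$ is again a lattice point; adding it to the lattice point $r(\bw_t)$ yields a lattice point.

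The key step is then to observe that the quantized update $\bw_{t+1} = q(r(\bw_t) + y_i r(\bx_i))$ incurs no error, because its argument already coincides with the restoration of an atom. Using the identity $q(r(a)) = a$ established earlier, this gives $r(\bw_{t+1}) = r(\bw_t) + y_i r(\bx_i) = \bw^\infty_{t+1}$, which closes the induction and, at $t = T$, yields $r(\bw_T) = \bw^\infty_T$.

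The main obstacle --- the single place where the lattice-closure argument could fail --- is confirming that $r(\bw_t) + y_i r(\bx_i)$ is genuinely an atom of $A$ rather than merely a point of the ambient infinite lattice, since $A$ is the lattice \emph{restricted} to $M$ and $q$ is only defined on $M$. I would dispatch this exactly as in the previous theorem: the norm bound gives $\|r(\bw_{t+1})\| \le \sqrt{t+1} \le \sqrt{T}$, and since $M$ is assumed to contain a ball of radius $\sqrt{T}$ centered at the origin, every iterate remains inside $M$ and therefore corresponds to a representable atom. With this containment guaranteed, the quantization step is lossless at each update and the induction runs cleanly.
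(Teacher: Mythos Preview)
Your proposal is correct and follows essentially the same approach as the paper: both arguments hinge on the lattice being closed under the Perceptron update, so that $q$ is the identity on each iterate, together with the $\sqrt{T}$-ball assumption to keep the iterates inside $M$. Your version is more explicit than the paper's (you spell out the induction, the base case at the origin, and the fact that identical iterates imply identical mistake sequences), but the substance is the same.
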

\begin{proof}
Since $X \subset A$, the only quantization step in the algorithm is on
\[
\bw_{t+1} = q(r(\bw_t) + y_i r(\bx_i)),
\]
for mistake $(\bx_i,y_i)$ with $\bx_i \in X \subset A$.  However, since $A$ is a lattice in $\R^d$, then $r(\bw_t)$ is on the lattice, and $r(\bx_i)$ is on the lattice, and hence, by definition, their sum (or difference if $y_i = -1$) is also on the lattice.  Hence letting $\bw_{t+1} = r(\bw_t) + y_i r(\bx_i)$, we have that $\bw_{t+1} = r(q(\bw_{t+1}))$.

By the condition in the theorem, the set $M$ contains a ball of radius $\sqrt{T}$.  Then, $\bw_t$ never leaves $M$ and never leaves the lattice defining $A$.
\end{proof}

In this case, the mistake bound is $O(1/\gamma^2)$, as in standard the full-precision Perceptron.

\subsection{Frank-Wolfe Algorithm with Quantization}
\label{sec:frank-wolfe}

%\jeff{WaiMing: add your proof here.}
%\jeff{WaiMing: please add a corollary relating it back to the perceptron algorithm, not just approximating $\mu$ on the simplex.}

Next, we will analyze the Frank-Wolfe algorithm on a dataset $(X, y)$.
Initially, when t = 0, we take $\bw_0 = q(r(\bx_i))$ where $\bx_i$ has its label $y_i = +1$ and $\norm{r(\bx_i)}$ is minimal.
In $t$-th step, identify an example $i$ such that
\[
	i = \arg\min_{i'} y_{i'}\pA{  r(\bw_t), r(\bx_{i'})  }
\]
and update
\[
	\bw_{t+1} = q\p{  r\pb{ q\p{  \alpha y_i r\p{\bx_i} } } +   r\pb{ q\p{ (1-\alpha) r\p{\bw_t}  }} }
\]
where
\[
	\alpha = \arg\min_{\alpha' \in [0,1]} \norm{ \alpha' y_i r(\bx_i)  +    (1-\alpha') r(\bw_t)  }
\]
We will refer to this algorithm as the quantized Frank-Wolfe algorithm. Note that the computation of the $\alpha$ requires line search over real numbers. While this may not be feasible in practice, we can show formal learnability guarantees that can act as a blueprint for further analysis, where the update of $\bw$ would be guided by combinatorial search over the atoms.

\begin{theorem}[Quantized Frank-Wolfe Convergence]
  Consider a data set $X \subset A \subset M \subset \R^d$, with quantization error of $\delta$, and where $(X,y)$ has a margin $\gamma$.  Assume every example $\bx \in X$ satisfies $\|r(\bx)\| \leq 1$, then after $T=O(\frac{1}{\sqrt{\gamma\delta}}\log \frac{1}{\eps}+\frac{1}{\eps\gamma})$ steps, the quantized Frank-Wolfe algorithm will return weights $\bw$ which guarantees
  \[
    \min_j y_j\left\langle r(\bx_j), \frac{r(\bw)}{\norm{r(\bw)}} \right\rangle > \gamma - \sqrt{\frac{24\delta}{\gamma}} - \eps.
  \]\label{th:frank-wolfe-theorem}
\end{theorem}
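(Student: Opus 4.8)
The plan is to read the quantized update as a \emph{perturbed Frank--Wolfe iteration} for the minimum-norm-point problem. Let $f(\bw)=\tfrac12\norm{r(\bw)}^2$ and minimize it over the polytope $P=\mathrm{conv}\pc{\,y_i\,r(\bx_i)\,}$. The oracle step $i=\argmin_{i'} y_{i'}\pA{r(\bw_t),\,r(\bx_{i'})}$ returns the vertex $s_t=y_i r(\bx_i)$ minimizing $\pA{\nabla f(\bw_t),\cdot}$, and the line search selects the optimal $\alpha$; so, ignoring quantization, this is precisely Frank--Wolfe on $f$ over $P$. The duality fact I would invoke is that $\min_{\bw\in P}\norm{\bw}$ equals the optimal margin $\gamma^\star\ge\gamma$, attained at a min-norm point $\bw^\star$ whose Frank--Wolfe gap is zero; hence forcing the gap to zero forces $r(\bw)/\norm{r(\bw)}$ to the max-margin direction, and the claimed inequality is a statement about how small the residual gap can be made under quantization.

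Next I would establish a perturbed progress recurrence. With optimal line search the exact step satisfies $\norm{\bw_{t+1}}^2\le\norm{\bw_t}^2-g_t^2/D_t^2$, where $g_t=\norm{r(\bw_t)}^2-\min_i y_i\pA{r(\bw_t),r(\bx_i)}$ is the Frank--Wolfe gap and $D_t=\norm{r(\bw_t)-s_t}\le 2$ because each $\norm{r(\bx_i)}\le1$ and the iterates stay near the unit ball. The three nested quantizations perturb the ideal update $\tilde{\bw}_{t+1}=(1-\alpha)r(\bw_t)+\alpha\,y_i r(\bx_i)$ by a vector $e$ with $\norm{e}\le 3\delta$. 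The crucial observation is that, in expanding $\norm{r(\bw_{t+1})}^2=\norm{\tilde{\bw}_{t+1}}^2+2\pA{\tilde{\bw}_{t+1},e}+\norm{e}^2$, the cross term is at most $2\norm{\tilde{\bw}_{t+1}}\cdot 3\delta$, and near the optimum $\norm{\tilde{\bw}_{t+1}}\approx\gamma^\star=O(\gamma)$; so the quantization nuisance is of order $\gamma\delta$, \emph{not} $\delta$. This yields $\norm{r(\bw_{t+1})}^2\le\norm{r(\bw_t)}^2-\tfrac14 g_t^2+6\gamma\delta+9\delta^2$, and the extra factor of $\gamma$ is exactly what turns the floor into a $\sqrt{\delta/\gamma}$ term rather than a $\sqrt{\delta}/\gamma$ term.

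I would then run a two-phase convergence argument with this recurrence. Using $\min_i y_i\pA{r(\bw_t),r(\bx_i)}\le\gamma^\star\norm{r(\bw_t)}$ gives $g_t\ge\norm{r(\bw_t)}\p{\norm{r(\bw_t)}-\gamma^\star}$, which lower-bounds per-step progress by a function of the suboptimality and drives a geometric-decrease phase costing $O\!\p{\tfrac{1}{\sqrt{\gamma\delta}}\log\tfrac1\eps}$ steps down to the vicinity of the floor, followed by the usual sublinear $O\!\p{\tfrac{1}{\eps\gamma}}$ Frank--Wolfe phase. The recurrence can make net progress only while $\tfrac14 g_t^2\gtrsim 6\gamma\delta$, so the gap stalls at $g\le\sqrt{24\gamma\delta}$; converting this through $\min_j y_j\pA{r(\bx_j),\,r(\bw)/\norm{r(\bw)}}=\norm{r(\bw)}-g/\norm{r(\bw)}\ge\gamma^\star-g/\gamma^\star$ yields $\gamma-\sqrt{24\gamma\delta}/\gamma-\eps=\gamma-\sqrt{24\delta/\gamma}-\eps$, with $\eps$ the residual optimization error from the sublinear phase.

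The step I expect to be the main obstacle is keeping the quantization accounting tight: the favorable $\gamma\delta$ (rather than $\delta$) nuisance term hinges on $\norm{r(\bw_t)}$ being genuinely of order $\gamma$ throughout the relevant phase, not merely at the final iterate, so I would need to show the iterates enter and remain in the small-norm regime before invoking the cross-term bound, and to track the constant $24$ through the balance between $-\tfrac14 g_t^2$ and $6\gamma\delta$. Establishing the precise two-phase iteration count $T=O\!\p{\tfrac{1}{\sqrt{\gamma\delta}}\log\tfrac1\eps+\tfrac{1}{\eps\gamma}}$ is the other delicate piece. Secondary checks are that the optimal step size lies in $[0,1]$, that $\norm{r(\bw_t)}$ stays bounded so that $D_t\le 2$ and the iterates never leave $M$, and that the excess norm $\norm{r(\bw_t)}-\gamma^\star$ stays nonnegative up to $O(\delta)$ so the final gap-to-margin conversion remains valid.
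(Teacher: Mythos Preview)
Your high-level plan---view the update as a perturbed Frank--Wolfe step on the minimum-norm-point problem, derive a progress recurrence with an additive quantization term, and run a two-phase (linear then sublinear) analysis---is exactly the route the paper takes. The difference is in \emph{which} potential you track, and that difference dissolves the obstacle you flagged.

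You work with squared norms: $\norm{r(\bw_{t+1})}^2 \le \norm{r(\bw_t)}^2 - \tfrac14 g_t^2 + 2\pA{\tilde\bw_{t+1},e} + \norm{e}^2$, and then need $\norm{\tilde\bw_{t+1}}=O(\gamma)$ to turn the cross term into $O(\gamma\delta)$. As you correctly worry, this is circular: the cross-term bound only holds once the iterate is already near the minimum-norm point, yet you need it to prove the iterate gets there. The paper sidesteps this entirely by tracking the \emph{norm} itself rather than the squared norm. With $r(\bw_{t+1})=\tilde\bw_{t+1}+\bu$ and $\norm{\bu}\le 3\delta$, the triangle inequality gives $\norm{r(\bw_{t+1})}\le\norm{\tilde\bw_{t+1}}+3\delta$ with no cross term at all. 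The progress on the exact step is then taken directly from the norm-based G\"artner--Jaggi inequality
\[
\norm{r(\bw_t)} - \norm{\tilde\bw_{t+1}} \;\ge\; \frac{\gamma}{8}\Bigl(\norm{r(\bw_t)} - \min_j y_j\bigl\langle r(\bx_j),\, r(\bw_t)/\norm{r(\bw_t)}\bigr\rangle\Bigr)^2 \;\ge\; \frac{\gamma}{8}\bigl(\norm{r(\bw_t)}-\gamma\bigr)^2,
\]
which already carries the factor of $\gamma$ out front and holds for every $t$, not just near the optimum. Setting $f_t=\norm{r(\bw_t)}-\gamma-\sqrt{24\delta/\gamma}$, the shift is chosen precisely so that the constant term cancels, $\tfrac{\gamma}{8}\cdot\tfrac{24\delta}{\gamma}=3\delta$, leaving
\[
f_t - f_{t+1} \;\ge\; \frac{\gamma}{8}f_t^2 + \sqrt{\tfrac{3\delta\gamma}{2}}\,f_t \;\ge\; \sqrt{\tfrac{3\delta\gamma}{2}}\,f_t,
\]
which is the clean linear rate $\exp(-\sqrt{3\delta\gamma/2})$ per step, giving $T_1=O\bigl(\tfrac{1}{\sqrt{\gamma\delta}}\log\tfrac{1}{\eps}\bigr)$ without ever assuming the iterate is already small. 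The second phase is then the contradiction argument you describe, with an additional $T_2=O(\tfrac{1}{\eps\gamma})$ steps.

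In short: your squared-norm recurrence forces you to bootstrap the small-norm regime before you can use it, and it is not clear your Phase~0 can be made to work at the stated rate; the paper's norm-based recurrence, together with the specific shift $\sqrt{24\delta/\gamma}$ that makes the additive $3\delta$ vanish, removes that dependence and yields the constants directly.
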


%\section{Proof of the Frank-Wolfe Convergence Theorem}
%This appendix presents the proof of the Quantized Frank-Wolfe Convergence theorem.

\begin{proof}

The update step in the algorithm allows us to expand $r(\bw_{t+1})$ as
\begin{align*}
  r(\bw_{t+1})
       & =
    r\left(q\left(  r\left( q\left(  \alpha y_i r\left(\bx_i\right) \right) \right) +   r\left(q\left( (1-\alpha\right) r\left(\bw_t\right)  \right)\right) \right) \\
       & =
    (1-\alpha) r(\bw_t) + \alpha y_i r(\bx_i)+ \bu.
\end{align*}
Here $\bu$ is a vector with $\norm{\bu} \leq 3\delta$.
In other words,
\[
  r(\bw_{t+1}) - \bu = (1-\alpha) r(\bw_t) + \alpha y_i r(\bx_i)
\]
Following the analysis of~\citet{gartner2009coresets}, we have
\begin{align*}
 & \norm{r(\bw_t)} - \norm{r(\bw_{t+1}) - \bu}                                                                   \\
  \geq & \frac{\gamma}{8}\p{\norm{r(\bw_t)} - \min_j y_j\langle r(\bx_j), \frac{r(\bw_t)}{\norm{r(\bw_t)}} \rangle}^2 \\
  \geq & \frac{\gamma}{8}\p{\norm{r(\bw_t)} - \gamma}^2.
\end{align*}

We can rearrange the above inequality and simplify it by defining $f_t =
\norm{r(\bw_t)} - \gamma - \sqrt{\frac{24\delta}{\gamma}}$. We get 
\begin{align*}. 
  f_t - f_{t+1}
  & =
    \norm{r(\bw_t)} - \norm{r(\bw_{t+1})} \\
  & \geq
    \norm{r(\bw_t)} - \norm{r(\bw_{t+1})-u} - 3\delta \\
  & \geq
    \frac{\gamma}{8}(\norm{r(\bw_t)}-\gamma)^2 - 3\delta \\
  & =
    \frac{\gamma}{8}(f_t + \sqrt{\frac{24\delta}{\gamma}})^2 -3\delta \\
  & =
    \frac{\gamma}{8}f_t^2 + \sqrt{\frac{3\delta\gamma}{2}} f_t \\
  & \geq
    \sqrt{\frac{3\delta\gamma}{2}} f_t
\end{align*}
That is, $f_{t+1} \leq (1-\sqrt{\frac{3\delta\gamma}{2}})f_t \leq \exp(-\sqrt{\frac{3\delta\gamma}{2}}) f_t$.

Suppose that the algorithm runs for $T_1$ steps, we have
\[
  \norm{r(\bw_{T_1})} \leq \gamma +\sqrt{\frac{24\delta}{\gamma}} + \exp(-T_1\sqrt{\frac{3\delta\gamma}{2}}).
\]
Furthermore, if ${T_1} = O(\frac{1}{\sqrt{\gamma\delta}}\log \frac{1}{\eps})$, then we can bound the norm of the reconstructed weight vector as $\norm{r(\bw_{T_1})} \leq \gamma + \sqrt{\frac{24\delta}{\gamma}} +\eps$.

In order to show the required bound, we need to consider the case where the algorithm runs for $O(\frac{1}{\eps\gamma})$ more steps. We will set up a contradiction to show this. Recall that
\[
  f_t - f_{t+1} \geq \frac{\gamma}{8}\left(\norm{r(\bw_t)} - \min_j y_j\left\langle r(\bx_j), \frac{r(\bw_t)}{\norm{r(\bw_t)}} \right\rangle\right)^2 - 3\delta.
\]
Suppose $\norm{r(\bw_t)} - \min_j y_i\left\langle r(\bx_j), \frac{r(\bw_t)}{\norm{r(\bw_t)}} \right\rangle > \eps + \sqrt{\frac{24\delta}{\gamma}}$ for $t > {T_1}$.
This allows us to simplify the bound for $f_t - f_{t+1}$ above to $\frac{\gamma}{8}(\eps + \sqrt{\frac{24\delta}{\gamma}})^2 - 3\delta = \eps(\frac{\gamma\eps}{8} + \sqrt{\frac{3\delta\gamma}{2}})$.
If the algorithm runs ${T_2} = \frac{8}{\eps\gamma+\sqrt{96\delta\gamma}}$ more steps,
\[
  f_{{T_1}+{T_2}}-f_{T_1} \geq {T_2}\eps(\frac{\gamma\eps}{8} + \sqrt{\frac{3\delta\gamma}{2}}) = \eps
\]
However, it leads to a contradiction of
\[
  \eps > f_{{T_1}+{T_2}} > f_{{T_1}+{T_2}}-f_{T_1}> \eps
\]
That means that $\norm{r(\bw_t)} - \min_j y_j\langle r(\bx_j), \frac{r(\bw_t)}{\norm{r(\bw_t)}} \rangle < \eps + \sqrt{\frac{24\delta}{\gamma}}$ for some $t \in [{T_1}, {T_1}+{T_2}]$.

Namely, if the algorithm runs $O(\frac{1}{\sqrt{\gamma\delta}}\log
\frac{1}{\eps}+\frac{1}{\eps\gamma+\sqrt{\delta\gamma}}) =
O(\frac{1}{\sqrt{\gamma\delta}}\log
\frac{1}{\eps}+\frac{1}{\eps\gamma})$ steps, it returns a vector $\bw$
that guarantees
\[
  \min_j y_j\left\langle r(\bx_j), \frac{r(\bw)}{\norm{r(\bw)}} \right\rangle > \gamma - \sqrt{\frac{24\delta}{\gamma}} - \eps.
\]
\end{proof}

%%% Local Variables:
%%% mode: latex
%%% TeX-master: "sysml_2019"
%%% End:

% We have presented the full proof of the theorem in the supplementary material.

As in the quantized Perceptron mistake bound, the proof of the above theorem follows the standard strategy for proving the convergence of the Frank-Wolfe algorithm.
The theorem points out that after $T$ steps, the margin of the resulting classifier will not be much smaller than the true margin of the data $\gamma$, and the gap is dependent on the quantization error $\delta$. Two corollaries of this theorem shed further light by providing additive and multiplicative bounds on the resulting margin if the quantization error satisfies certain properties.
\begin{cor}
Consider a data set $X \subset A \subset M \subset \R^d$, with quantization error of $\delta \leq \eps^2\gamma$, and where $(X,y)$ has a margin $\gamma$.  Assume every example $\bx \in X$ satisfies $\|r(\bx)\| \leq 1$, then after $T=O(\frac{1}{\eps\gamma}\log \frac{1}{\eps})$ steps, the quantized Frank-Wolfe algorithm will return weights $\bw$ which guarantees $\min_j y_j\langle r(\bx_j), \frac{r(\bw)}{\norm{r(\bw)}} \rangle > \gamma - \eps$.
\end{cor}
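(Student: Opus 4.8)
The plan is to derive this as a direct specialization of Theorem~\ref{th:frank-wolfe-theorem} under the extra hypothesis $\delta \le \eps^2\gamma$. The first thing I would observe is that, in the proof of that theorem, the quantization error $\delta$ enters \emph{only} through the per-update noise bound $\norm{\bu} \le 3\delta$; every subsequent inequality (the contraction $f_{t+1} \le \p{1-\sqrt{3\delta\gamma/2}}f_t$, the resulting norm bound on $r(\bw_{T_1})$, and the two-phase potential argument) is obtained mechanically from that single estimate. Because $\norm{\bu} \le 3\delta$ is an \emph{upper} bound, it remains valid if we replace the true $\delta$ by any larger nominal value. Setting the nominal error to $\delta_0 = \eps^2\gamma \ge \delta$ therefore keeps the entire proof intact while only weakening the final guarantee, which is exactly the trade we want.

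Having fixed $\delta_0 = \eps^2\gamma$, I would substitute it into the two conclusions of the theorem. For the margin, $\sqrt{24\delta_0/\gamma} = \sqrt{24}\,\eps$, so the guarantee becomes $\min_j y_j \pA{r(\bx_j), r(\bw)/\norm{r(\bw)}} > \gamma - \p{1+\sqrt{24}}\eps$. For the iteration count, $\sqrt{\gamma\delta_0} = \eps\gamma$, so the dominant term collapses to $\frac{1}{\eps\gamma}\log\frac1\eps$ and the additive $\frac1{\eps\gamma}$ term is absorbed, giving $T = O\p{\frac1{\eps\gamma}\log\frac1\eps}$. Crucially, this inflation-of-$\delta$ step is what licenses the shorter run: the true-$\delta$ bound $\frac1{\sqrt{\gamma\delta}}\log\frac1\eps$ would \emph{grow} as $\delta \to 0$, so one cannot simply evaluate the theorem at the true $\delta$; running the actual algorithm while \emph{analyzing} it with $\delta_0$ is the device that reconciles the step count with the claimed $O\p{\frac1{\eps\gamma}\log\frac1\eps}$.

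Finally I would clean up the constant: applying the theorem with accuracy parameter $\eps' = \eps/\p{1+\sqrt{24}}$ turns $\gamma - \p{1+\sqrt{24}}\eps'$ into $\gamma - \eps$, and changes only the hidden constants in $T$ (and, strictly, tightens the hypothesis on $\delta$ by the same constant factor, which I would absorb into the stated $\delta \le \eps^2\gamma$). The main obstacle is not any hard calculation but the justification of the inflation step: one must check that every inequality in the theorem's proof truly depends on $\delta$ only monotonically through $\norm{\bu}\le 3\delta$ --- in particular that the cancellation $\frac{\gamma}{8}\cdot\frac{24\delta_0}{\gamma} = 3\delta_0$ still produces the clean recursion $f_{t+1}\le\p{1-\sqrt{3\delta_0\gamma/2}}f_t$ when $f_t$ is redefined with $\delta_0$ in place of $\delta$ --- so that the faster convergence rate associated with the nominal error is genuinely attained by the real run.
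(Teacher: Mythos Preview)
Your approach is correct, and indeed the paper does not supply a proof of this corollary at all --- it is stated without argument as a consequence of Theorem~\ref{th:frank-wolfe-theorem}. Your derivation by substituting $\delta_0 = \eps^2\gamma$ into the theorem's conclusions and then rescaling $\eps$ is the natural route and matches what the paper evidently intends.

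You are in fact more careful than the paper on one point. A literal invocation of Theorem~\ref{th:frank-wolfe-theorem} with the true $\delta$ yields a first-phase length $\frac{1}{\sqrt{\gamma\delta}}\log\frac{1}{\eps}$ that \emph{worsens} as $\delta\to 0$, so the corollary does not follow by direct substitution when $\delta$ is strictly smaller than $\eps^2\gamma$. Your inflation device --- rerunning the theorem's proof with the nominal value $\delta_0 = \eps^2\gamma \ge \delta$, which is licensed because $\delta$ enters only as the upper bound $\norm{\bu}\le 3\delta$ --- is exactly the right fix, and your verification that the cancellation $\tfrac{\gamma}{8}\cdot\tfrac{24\delta_0}{\gamma}=3\delta_0$ survives under the redefined potential $f_t = \norm{r(\bw_t)} - \gamma - \sqrt{24\delta_0/\gamma}$ closes the gap cleanly. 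The paper simply does not address this subtlety.
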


\begin{cor}
Consider a data set $X \subset A \subset M \subset \R^d$, with quantization error of $\delta \leq \eps^2\gamma^3$, and where $(X,y)$ has a margin $\gamma$.  Assume all $\bx \in X$ satisfies $\|r(\bx)\| \leq 1$, then after $T=O(\frac{1}{\eps\gamma^2}\log \frac{1}{\eps\gamma})$ steps, the quantized Frank-Wolfe algorithm will return $\bw$ which guarantees $\min_j y_j\langle r(\bx_j), \frac{r(\bw)}{\norm{r(\bw)}} \rangle > (1- \eps) \gamma$.
\end{cor}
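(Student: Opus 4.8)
The plan is to obtain this multiplicative guarantee as a direct specialization of Theorem~\ref{th:frank-wolfe-theorem}, whose conclusion $\min_j y_j\langle r(\bx_j), r(\bw)/\norm{r(\bw)}\rangle > \gamma - \sqrt{24\delta/\gamma} - \eps$ already carries a free accuracy parameter. The idea is to split the total multiplicative slack $\eps\gamma$ between the two error sources in that bound: the additive term coming from the finite number of iterations, and the term $\sqrt{24\delta/\gamma}$ coming from the quantization. Concretely, I would invoke Theorem~\ref{th:frank-wolfe-theorem} with its accuracy parameter set to $\eps_0 := \eps\gamma/2$, so that its guarantee reads $\gamma - \sqrt{24\delta/\gamma} - \eps\gamma/2$.

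The first step is to control the quantization term using the hypothesis on $\delta$. Under $\delta \le \eps^2\gamma^3$ (up to the absolute constant $1/96$, which can be absorbed by rescaling $\eps$), we have $\sqrt{24\delta/\gamma} \le \eps\gamma/2$. Substituting this into the specialized guarantee gives
\[
  \min_j y_j\left\langle r(\bx_j), \frac{r(\bw)}{\norm{r(\bw)}}\right\rangle > \gamma - \frac{\eps\gamma}{2} - \frac{\eps\gamma}{2} = (1-\eps)\gamma,
\]
which is exactly the claimed bound.

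The second step is to re-express the iteration count $O(\frac{1}{\sqrt{\gamma\delta}}\log\frac{1}{\eps_0} + \frac{1}{\eps_0\gamma})$ from Theorem~\ref{th:frank-wolfe-theorem} in terms of $\eps$ and $\gamma$ only. The second term is immediate: $\frac{1}{\eps_0\gamma} = \frac{2}{\eps\gamma^2} = O(\frac{1}{\eps\gamma^2})$. For the first term I would evaluate it at the coarsest admissible quantization $\delta = \eps^2\gamma^3$, giving $\frac{1}{\sqrt{\gamma\delta}} = \frac{1}{\eps\gamma^2}$ and $\log\frac{1}{\eps_0} = \log\frac{2}{\eps\gamma} = O(\log\frac{1}{\eps\gamma})$, so the first term becomes $O(\frac{1}{\eps\gamma^2}\log\frac{1}{\eps\gamma})$. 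Adding the two yields the stated $T = O(\frac{1}{\eps\gamma^2}\log\frac{1}{\eps\gamma})$.

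The main obstacle is conceptual rather than computational: the first iteration-count term $\frac{1}{\sqrt{\gamma\delta}}\log\frac{1}{\eps_0}$ is \emph{decreasing} in $\delta$, so reporting a clean $\delta$-free bound requires substituting $\delta$ at its largest admissible value $\eps^2\gamma^3$; for strictly smaller $\delta$ the margin guarantee only improves, and one stops as soon as it is met. The only other care needed is bookkeeping of absolute constants --- matching the requirement $\sqrt{24\delta/\gamma}\le\eps\gamma/2$ against the stated condition $\delta\le\eps^2\gamma^3$ --- which is routine and absorbed into the hidden constants of the hypothesis and the $O(\cdot)$. This mirrors exactly the strategy behind the preceding (additive) corollary, where $\eps_0=\eps/2$ and $\delta=\eps^2\gamma$ reproduce its $O(\frac{1}{\eps\gamma}\log\frac{1}{\eps})$ bound.
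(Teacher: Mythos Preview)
Your proposal is correct and is exactly the intended derivation: the paper presents this corollary without a separate proof, treating it as a direct specialization of Theorem~\ref{th:frank-wolfe-theorem} obtained by setting the theorem's accuracy parameter to a constant multiple of $\eps\gamma$ and substituting $\delta=\Theta(\eps^2\gamma^3)$, precisely as you do. Your handling of the constants and of the $\delta$-monotonicity subtlety in the iteration count is also appropriate; the paper implicitly absorbs both into the $O(\cdot)$ notation and the condition $\delta\le\eps^2\gamma^3$.
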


In essence, these results show that if the worst-case quantization error is small compared to the margin, then quantized Frank-Wolfe will converge to a good set of parameters. As in the quantized Perceptron, we do not make any assumptions about the nature of quantization and the distribution of atoms. In other words, the theorem and its corollaries apply not only to fixed and floating point quantizations, but also to custom quantizations of the reals.

\section{Experiments and Results}
\label{sec:experiments}

In this section, we present our empirical findings on how the choice of numeric representation affects performance of classifiers trained using Perceptron. Specifically, we emulate three types of lattices: logarithmic lattices (like floating point), regular lattices (like fixed point), and custom quantizations which are defined solely by the collection of points they represent precisely.
Our results empirically support and offer additional intuition for the theoretical conclusions from \S\ref{sec:bounds} and investigate sources of quantization error.
Additionally, we also investigate the research question: {\em Given a dataset and a budget of $b$ bits, which points of $\R^d$ should we represent to get the best performance.}

\begin{table*}[t]
\centering
\caption{Characteristics of the datasets used in our experiments.}
% \small
\footnotesize
% \tiny
\begin{tabular}{llrrrrrr}
\toprule
          & Feature & Num      & Max Feature & Min Feature & Number of        & Majority & Max      \\
Dataset   & Type    & Features & Magnitude   & Magnitude   & Training/Testing & Baseline & Accuracy \\
\midrule
synth01   & Real    & 2        & 6.9         & 0.9         & 160/40           & 50       & 100      \\
synth02   & Real    & 2        & 2.7         & 0.01        & 80/20            & 50       & 100      \\
mushrooms & Bool    & 112      & Bool        & Bool        & 7,000/1,124      & 52       & 100      \\
gisette   & Real    & 5,000    & 3.2         & 0.001       & 6,000/1,000      & 50       & 97       \\
cod-rna   & Real    & 8        & 1868        & 0.08        & 59,535/271,617   & 66       & 88       \\
farm-ad   & Bool    & 54,877   & Bool        & Bool        & 3,100/1043       & 53       & 88       \\
\bottomrule
\end{tabular}
\label{tab:data-table}
\end{table*}

%\subsection{Experimental Setup And Datasets}
%\label{sec:experiments:setup}

% ANNIE TODO: maybe move to appencix
% To closely emulate the formalization in \S\ref{sec:framework}, we define a quantization via two functions: a quantizer function $q$ (which translates any real vector to a representable atom), and a restoration function $r$ (which translates every atom to the real vector which it represents precisely).
%
% We use 64-bit floating points as a surrogate for the reals. In both the logarithmic and regular lattices, the distribution of lattice points used is symmetric in all dimensions. This means that if we model a bit-width $b$ with $2^b$ lattice points, then the $d$-dimensional feature vectors will exist in a lattice with $d \cdot 2^b$ distinct points.
% %
% Fixed point requires specifying a range of numbers that can be represented and the available bits define an evenly spaced grid in that range. For floating points, we need to specify the number of bits used for the exponent; apart from one bit reserved for the sign, all remaining bits are used for the mantissa.
%
% We have also implemented a fully custom quantization with no geometric assumptions. Its purpose is to address the question: if we have more information about a dataset, can we learn with substantially fewer bits? The supplementary material details our design decisions in implementing the quantizers.

\subsection{Quantization Implementation Design Decisions}

Before describing the experiments, we will first detail the design decisions that were made in implementing the quantizers used in experiments.

To closely emulate the formalization in \S~\ref{sec:framework}, we define a quantization via two functions: a quantizer function $q$ (which translates any real vector to a representable atom), and a restoration function $r$ (which translates every atom to the real vector which it represents precisely).

We use 64-bit floating points as a surrogate for the reals. In both the logarithmic and regular lattices, the distribution of lattice points used is symmetric in all dimensions. This means that if we model a bit-width $b$ with $2^b$ lattice points, then the $d$-dimensional feature vectors will exist in a lattice with $2^{bd}$ distinct points.

Fixed point requires specifying a range of numbers that can be represented and the available bits define an evenly spaced grid in that range. For floating points, we need to specify the number of bits used for the exponent; apart from one bit reserved for the sign, all remaining bits are used for the mantissa.

We have also implemented a fully custom quantization with no geometric assumptions. Its purpose is to address the question: if we have more information about a dataset, can we learn with substantially fewer bits?

\paragraph{A Logarithmic Lattice: Modeling Floating Point}

We have implemented a logarithmic lattice which is modeled on a simplified floating point representation. The latest IEEE specification (2008) defines the floating point format for only 16, 32, 64, 128 and 256 bit wide representations, therefore we have adapted the format for arbitrary mantissa and exponent widths. The interpretation of the exponent and the mantissa in our implementation is the same as defined in the standard; the following section further explores which points are representable in this lattice. While the official floating point specification also includes denormalized values, we have chosen to not represent them. In practice denormalized values complicate the implementation of the floating point pipeline which is contrary with our goal of designing power-conscious numeric representations. We have also chosen to not represent  $\pm\infty$; instead our implementation overflows to the maximum or minimum representable value. This behavior is reasonable for our domain because the operation which fundamentally drives learning is considering the sign of the dot product, and bounding the maximum possible magnitude does not influence the sign of the dot product.

\paragraph{A Regular Lattice: Modeling Fixed Point}

We have also implemented a regular lattice which is modeled on a fixed point representation. This lattice is parameterized by the range in which values are precisely represented, and by the density of represented points. The range parameter is analogous to the exponent in floating point (both control the range of representable values), and the density parameter is analogous to the mantissa. Similarly to our floating point implementation, our fixed point representation symmetrically represents positive and negative values, and has the same overflow behavior.

\paragraph{A Custom Lattice: Quantizing With A Lookup Table}

In addition to the logarithmic and regular lattices, we have also implemented a fully custom lattice. This lattice is represented as a lookup table that maps precisely representable points to a boolean encoding. For instance, if we wish to use a bit-width of 2, meaning we can precisely represent 4 points, we can create a table with 4 rows, each of which map a vector in $R^d$ to one of the 4 available atoms. The quantization function for this table quantizer is defined by returning the atom which is mapped to the vector found by performing nearest neighbors on the precisely represented vectors. While a hardware implementation is beyond the scope of this paper, lookup tables can be implemented efficiently in hardware.

\begin{figure*}
\includegraphics[width=1\textwidth]{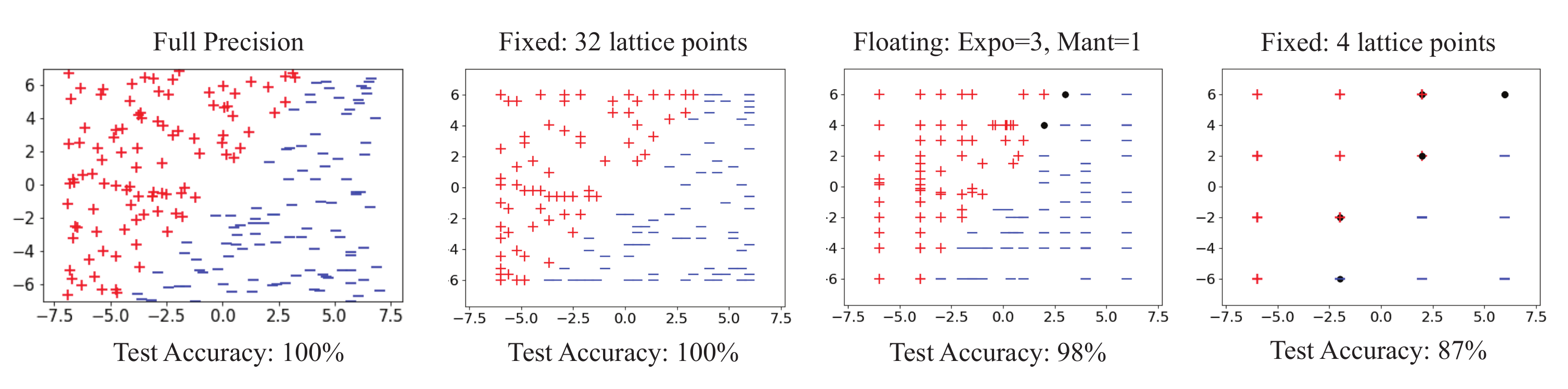}
\caption{The synth01 dataset under various quantizations. Misclassified points are denoted by a black circle. Left to Right: Full Precision; Fixed Point in [-6, 6] with 32 lattice points/dimension; Floating Point with 3 bits of exponent, 1 bit of mantissa; Fixed Point in [-6, 6] with 4 lattice points/dimension.}
\label{fig:lattices}
\end{figure*}

\subsection{Experimental Setup}

To gain a broad understanding of how quantization affects learning, we have selected datasets with a variety of characteristics, such as number of features, number of feature values, and linear separability that may influence learnability. Table \ref{tab:data-table} summarizes them. The majority baseline in table \ref{tab:data-table} gives the accuracy of always predicting the most frequent label, and max accuracy specifies the accuracy achieved by quantizing to 64-bit floats.
% We created two synthetic datasets synth01 and synth02 to study how real-valued vectors map to quantized atoms and to explore fully custom quantizations in a controlled setting. We also report results on a linearly separable dataset (mushrooms) and several non-separable datasets (gisette, cod-rna, farm-ad).
\footnote{These datasets are available on the UCI machine learning repository or the {\tt libsvm} data repository.}
% There is a trade-off between number of features and the range of feature values so we explore both datasets which take on a large variety of feature values for a small number of features (cod-rna) as well as a large number of features with few feature values (farm-ad).

We have implemented a generic version of Perceptron that uses a given quantization. For each dataset, we ran Perceptron for 3 epochs with a fixed learning rate of 1 to simplify reasoning about the updates; using a decaying learning rate produces similar results. % Some relevant properties of the datasets are detailed in table ~\ref{tab:data-table}.

\subsection{Sources of Quantization Error}
\label{sec:is-learning-possible}

% ANNIE TODO
%\begin{figure}
%\includegraphics[width=0.5\textwidth]{}
%\caption{Plots of mistakes vs. training examples seen in one epoch on the mushrooms dataset.}
%\label{fig:mush-mistake}
%\end{figure}

\paragraph{Mapping Vectors to Lattice Points}

First, let us use a 2-dimensional synthetic linearly separable dataset to illustrate the effects of quantization. Figure~\ref{fig:lattices} shows the synth01 dataset presented under different quantizations.
The quantization second-to-the-left (Fixed: 32 lattice points) achieves 100\% accuracy while providing only $2^{10}$ possible lattice points as opposed to the $ 2^{128}$ lattice points available under full precision.

What are the sources of error in these quantizations? In the plot second from the right (Floating: expo=3, mant=1) there are two misclassified points that are close to the decision boundary in the full precision plot. These points are misclassified because the quantization has insufficient resolution to represent the true weight vector. In the right-most plot (Fixed: 4 lattice points) some of the misclassified points are plotted as both being a correctly classified and a misclassified point. There are points in the test set with different labels which get quantized to the same point, therefore that lattice point contains both correctly classified test points and misclassified test points. In effect, this quantization is mapping a dataset which is linearly separable in full precision to one which is linearly inseparable under low precision.

\paragraph{Learning, and Not Learning, on Mushrooms}

Table~\ref{tab:mushrooms-results} reports set accuracies on the mushrooms dataset when quantized under a variety of fixed point parameters. The mushrooms dataset is linearly separable, and indeed, we observe that with 256 lattice points per dimension distributed evenly in the interval $[-1, 1]^{112}$ (corresponding to $2^8$ bits for each of the 112 dimensions) we can achieve 100\% test accuracy. Having only 4 or 8 lattice points per dimension, however, is insufficient to find the classifying hyperplane in any of the reported ranges.

Notice that for parameter values in a certain range, the classifier does not learn at all (reporting 50\% accuracy), but in the remaining range, the classifier does fine. This bifurcation is caused by edge-effects of the quantizations; the atoms on the outside corners of the representable points act as sinks. Once the classifier takes on the value of one of these sink atoms, the result of an update with any possible atom snaps back to the same sink atom, so no more learning is possible. The algorithm does not learn under quantizations which had many such sinks; the sinks are an artifact of the distribution of points, the rounding mode and the overflow mode.

% ANNIE TODO
%Figure~\ref{fig:mush-mistake} shows the number of mistakes made by Perceptron on the mushrooms dataset under different quantizations. %These learning curves show additional support for the argument that certain quantizations are more amenable for learning.

\begin{table}
  \centering
  \footnotesize
  \caption{Test set accuracies on the mushrooms dataset for different fixed point quantizations. Green cells are closer to the maximum possible accuracy; redder cells are closer to the majority baseline. }
  \begin{tabular}{r|p{0.4cm}p{0.4cm}p{0.4cm}p{0.4cm}p{0.4cm}p{0.4cm}}
    \toprule
                    & \multicolumn{6}{c}{\footnotesize Number of lattice points per dimension}                                           \\
   Range            & 8                       & 16          & 32          & 64           & 128          & 256              \\
    \midrule
    $[-0.5, 0.5]$   & \cc{52}{48} &\cc{93}{48} & \cc{97}{48} & \cc{98}{48} & \cc{100}{48} & \cc{100}{48} \\
    $[-0.75, 0.75]$ & \cc{48}{48} &\cc{93}{48} & \cc{99}{48} & \cc{100}{48} & \cc{100}{48} & \cc{100}{48} \\
    $[-1,1]$        & \cc{52}{48} &\cc{93}{48} & \cc{95}{48} & \cc{98}{48} & \cc{99}{48} & \cc{100}{48} \\ 
    $[-2,2]$        & \cc{52}{48} &\cc{48}{48} & \cc{88}{48} & \cc{96}{48} & \cc{99}{48} & \cc{100}{48} \\
    $[-4,4]$        & \cc{48}{48} &\cc{48}{48} & \cc{52}{48} & \cc{99}{48} & \cc{100}{48} & \cc{100}{48} \\
    $[-8,8]$        & \cc{48}{48} & \cc{48}{48} & \cc{52}{48} & \cc{48}{48} & \cc{85}{48} & \cc{99}{48} \\
    \bottomrule
  \end{tabular}
  \label{tab:mushrooms-results}
\end{table}

%%% Local Variables:
%%% mode: latex
%%% TeX-master: "nips_2018"
%%% End:

\subsection{Which Atoms Are Necessary?}
\label{sec:experiments:atoms}

Given a dataset, a natural question is: how many bits are necessary to get sufficient classification accuracy? This question is insufficient; in this section, we will discuss why it is not only the number of lattice points, but also their positions, that affect learnability.

\paragraph{The Most Bang For Your Bits}

Table~\ref{tab:gisette-results} reports testing accuracies for different choices of both fixed and floating point parameters that result in the same bit-width for the gisette dataset; table~\ref{tab:farm-results} reports the same for the farm-ad dataset. The table reports wild variation -- from completely unconverged weights reporting 50\% accuracy to well-converged weights reporting 94\% accuracy. With sufficiently many bits (the right-most column) any quantization with sufficiently large range (all rows but the top fixed point row); however it is possible to get high accuracy even at lower bit-widths, if the placement of the atoms is judiciously chosen.

\begin{table}
  \centering
  \footnotesize
  \caption{Test set accuracies on the gisette dataset for different quantizations. For the floating point representations, each row represents a different value of the exponent part. One bit in the bit budget is assigned for the sign and the rest of the bits are used for the mantissa.  Green cells are closer to the maximum possible accuracy, while redder cells are closer to the majority baseline.}   
  \begin{tabular}{r|cccc}
    \toprule
                 & \multicolumn{4}{c}{Bit budget for fixed points}                   \\
    Range        & 6 bits      & 7 bits      & 8 bits      & 9 bits                  \\
    \midrule
    $[-8,8]$     & \cc{90}{50} & \cc{89}{50} & \cc{62}{50} & \cc{50}{50}             \\ 
    $[-16,16]$   & \cc{92}{50} & \cc{93}{50} & \cc{88}{50} & \cc{88}{50}             \\ 
    $[-32,32]$   & \cc{50}{50} & \cc{90}{50} & \cc{94}{50} & \cc{93}{50}             \\ 
    $[-64,64]$   & \cc{50}{50} & \cc{50}{50} & \cc{91}{50} & \cc{94}{50}             \\ 
    $[-128,128]$ & \cc{50}{50} & \cc{50}{50} & \cc{50}{50} & \cc{89}{50}             \\ 
    \bottomrule
  \end{tabular}\hfill
  \begin{tabular}{r|cccc}
    \toprule
    \# exponent  & \multicolumn{4}{c}{Bit budget for floating points} \\
    bits         & 6 bits      & 7 bits      & 8 bits      & 9 bits                  \\
    \midrule
    1            & \cc{57}{50} & \cc{71}{50} & \cc{82}{50} & \cc{82}{50}             \\
    2            & \cc{88}{50} & \cc{81}{50} & \cc{88}{50} & \cc{82}{50}             \\
    3            & \cc{81}{50} & \cc{85}{50} & \cc{83}{50} & \cc{94}{50}             \\
    4            & \cc{50}{50} & \cc{51}{50} & \cc{88}{50} & \cc{94}{50}             \\
    5            & --          & \cc{50}{50} & \cc{50}{50} & \cc{77}{50}             \\
    6            & --          & --          & \cc{50}{50} & \cc{50}{50}             \\
    7            & --          & --          & --          & \cc{50}{50}             \\
    \bottomrule
  \end{tabular}

  \label{tab:gisette-results}
\end{table}

%%% Local Variables:
%%% mode: latex
%%% TeX-master: "nips_2018"
%%% End:

%
 \begin{table}
  \centering
  \footnotesize
    \caption{Test set accuracies on the farm-ads dataset for different quantizations. For the floating point representations, each row represents a different value of the exponent part. One bit in the bit budget is assigned for the sign and the rest of the bits are used for the mantissa.  Green cells are closer to the maximum possible accuracy, while redder cells are closer to the majority baseline.}
  \begin{tabular}{r|cccc}
    \toprule
                 & \multicolumn{4}{c}{Bit budget for fixed points}       \\
    Range        & 9 bits      & 10 bits     & 11 bits     & 12 bits     \\
    \midrule
    $[-8,8]$     & \cc{83}{51} & \cc{82}{51} & \cc{79}{51} & \cc{86}{51} \\ 
    $[-16,16]$   & \cc{77}{51} & \cc{77}{51} & \cc{76}{51} & \cc{87}{51} \\ 
    $[-32,32]$   & \cc{51}{51} & \cc{59}{51} & \cc{66}{51} & \cc{88}{51} \\ 
    $[-64,64]$   & \cc{51}{51} & \cc{51}{51} & \cc{56}{51} & \cc{87}{51} \\ 
    $[-128,128]$ & \cc{51}{51} & \cc{51}{51} & \cc{52}{51} & \cc{88}{51} \\ 
    \bottomrule
  \end{tabular}\hfill
  \begin{tabular}{r|cccc}
    \toprule    
    \# exponent  & \multicolumn{4}{c}{Bit budget for floating points}    \\
    bits         & 7 bits      & 8 bits      & 9 bits      & 10 bits     \\
    \midrule
    1            & \cc{84}{51} & \cc{84}{51} & \cc{84}{51} & \cc{84}{51} \\
    2            & \cc{86}{51} & \cc{89}{51} & \cc{87}{51} & \cc{87}{51} \\
    3            & \cc{87}{51} & \cc{86}{51} & \cc{86}{51} & \cc{86}{51} \\
    4            & \cc{85}{51} & \cc{87}{51} & \cc{89}{51} & \cc{89}{51} \\
    5            & \cc{85}{51} & \cc{85}{51} & \cc{89}{51} & \cc{87}{51} \\
    6            & --          & \cc{85}{51} & \cc{85}{51} & \cc{85}{51} \\
    7            & --          & --          & --          & \cc{85}{51} \\
    \bottomrule
  \end{tabular}
  \label{tab:farm-results}
\end{table}

%%% Local Variables:
%%% mode: latex
%%% TeX-master: "quantizedPerceptron"
%%% End:

\paragraph{Normalization \& Quantization}

The cod-rna dataset contains a small number features, but they span in magnitude from 1868 to 0.08. This large range in scale makes cod-rna unlearnable at small bit-widths; it requires both a high lattice density to represent the small magnitude features and sufficient range to differentiate the large magnitude features. We found cod-rna required at least 12 bits under a floating point quantization (1 bit sign, 5 bits exponent, 6 bit mantissa) and at least 11 bits under a fixed point quantization ($2^{11}$ points in $[-2048, 2048]^{11}$). Quantization and normalization are inseparable; the range of feature magnitudes directly influences how large the lattice must be to correctly represent the data.

\paragraph{Low Bitwidth Custom Quantization}

\begin{figure*}[t]
\includegraphics[width=1\textwidth]{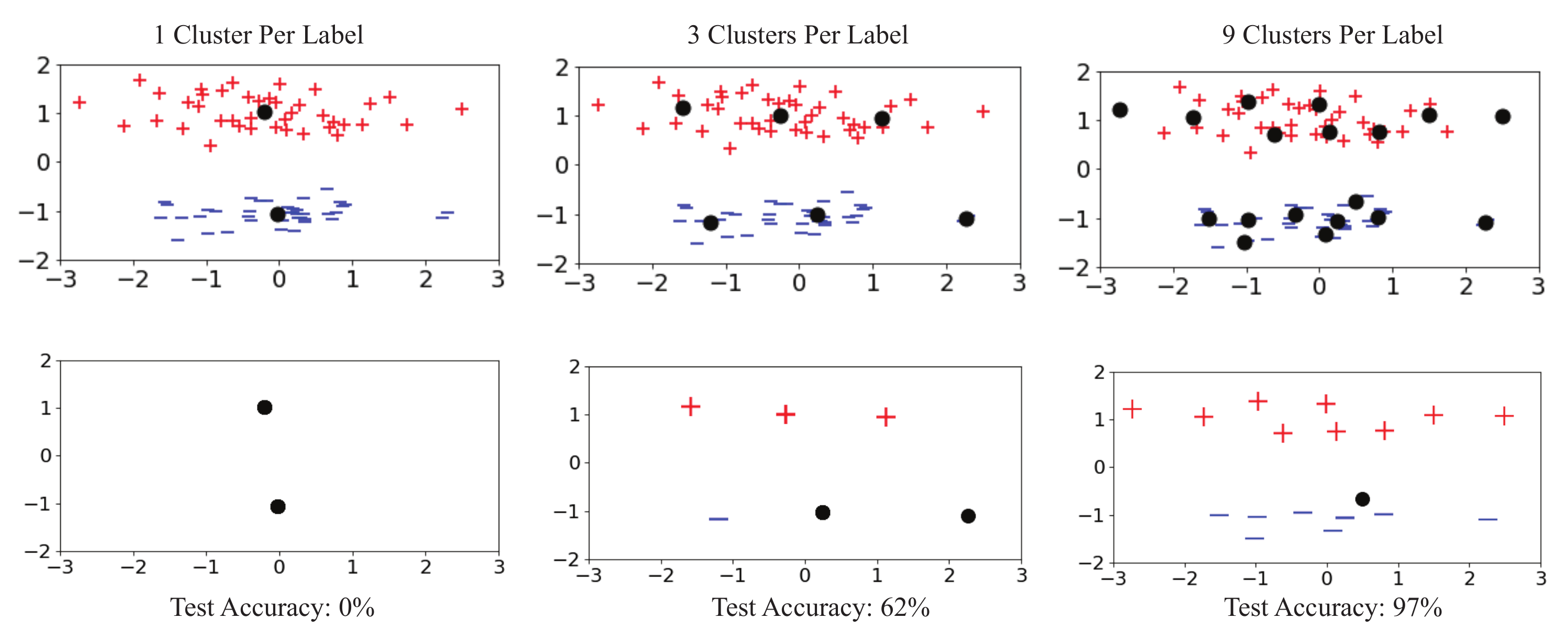}

% \vspace{-4mm}
\caption{Learning on a custom lattice determined through clustering, shown on synth02. Top row: Each label was clustered into (left-to-right) 1, 3, 9 clusters; cluster centers (black dots) were then used as atoms. Bottom row: Test prediction accuracy. correctly labeled positive points are denoted by $+$, correctly labeled negative points are denoted by $-$, mislabeled are denoted by black dots.}
\label{fig:table_quant}
\end{figure*}

Figure~\ref{fig:table_quant} presents the results of learning on the synth02 dataset under a fully custom quantization. This quantization was produced by clustering the positive and negative training examples separately using k-means clustering for k= 1, 3, and 9, and then taking the cluster centers and using those as the atoms. The top row displays the cluster centers in relation to the data. The bottom row shows the results of training Perceptron using only those lattice points, and then testing on the test set: a red plus denotes a correctly classified positively labelled test point, a blue minus denotes a correctly classified positively labeled test point, and the black dot denotes an incorrectly classified point. The coarsest quantization (left) contains little information -- the classification accuracy could either be 0 or 100; the two finer quantizations result in 62 and 97 percent accuracy, showing that it is possible to learn under a coarse custom quantization. Techniques for creating custom quantizations for a given dataset are left as future work.

\section{Related Work and Discussion}
\label{sec:related}

Studying the impact of numerical representations on learning was a topic of active interest in the context of neural networks in the nineties~\citep{holt1991back,hoehfeld1992learning,simard1994backpropagation} --- with focus on fixed point, floating point or even integer representations.  The general consensus of this largely empirical line of research suggested the feasibility of backpropagation-based neural network learning. Also related is the work on linear threshold functions with noisy updates~\citep{blum1998polynomial}.

In recent years, with the stunning successes of neural networks~\citep{goodfellow2016deep}, interest in studying numeric representations for learning has been re-invigorated~\citep[for example]{courbariaux2014training,gupta2015deep}.
In particular, there have been several lines of work focusing on convolutional neural networks~\citep[inter alia]{lin2016fixed,wu2016quantized,das2018mixed,micikevicius2017mixed} which show that tuning or customizing numeric precision does not degrade performance.

Despite the many empirical results pointing towards learnability with quantized representations, there has been very little in the form of theoretical guarantees. Only recently, we have started seeing some work in this direction~\citep{zhang2017zipml,alistarh2016qsgd,chatterjee2017towards}. The ZipML framework~\citep{zhang2017zipml} is conceptually related to the work presented here in that it seeks to formally study the impact of quantization on learning. But there are crucial differences in both formalization --- while this paper targets online updates (Perceptron and Frank-Wolfe), ZipML studies the convergence of stochastic gradient descent. Moreover, in this paper, we formally and empirically analyze quantized versions of {\em existing} algorithms, while ZipML proposes a new double-rounding scheme for learning.

Most work has focused on the standard fixed/floating point representations. However, some recent work has suggested the possibility of low bitwidth custom numeric representations tailored to learning~\citep{seide20141,hubara2016quantized,rastegari2016xnor,park2017weighted,zhang2017zipml,koster2017flexpoint}. Some of these methods have shown strong predictive performance with surprisingly coarse quantization (including using one or two bits per parameter!). The formalization for quantized learning presented in this paper could serve as a basis for analyzing such models.

Due to the potential power gains, perhaps unsurprisingly, the computer architecture community has shown keen interest in low bitwidth representations. For example, several machine learning specific architectures assume low precision representations~\citep{akopyan2015truenorth,shafiee2016isaac,jouppi2017datacenter,kara2017fpga} and this paper presents a formal grounding for that assumption. The focus of these lines of work have largely been speed and power consumption. However, since learning algorithms implemented in hardware {\em only} interact with quantized values to represent learned weights and features, guaranteed learning with coarse quantization is crucial for their usefulness. Indeed, by designing dataset- or task-specific quantization, we may be able to make further gains.

\section{Conclusion}

Statistical machine learning theory assumes we learn using real-valued vectors, however this is inconsistent with the discrete quantizations we are forced to learn with in practice. We propose a framework for reasoning about learning under quantization by abandoning the real-valued vector view of learning, and instead considering the subset of $\R^d$ which is represented precisely by a quantization. This framework gives us the flexibility to reason about fixed point, floating point, and custom numeric representation. We use this framework to prove convergence guarantees for quantization-aware versions of the Perceptron and Frank-Wolfe algorithms. Finally, we present empirical results which show that we can learn with much fewer than 64-bits, and which points we choose to represent is more important than how many points.

\subsubsection*{Acknowledgements}
Jeff Phillips thanks his support from NSF CCF-1350888, ACI-1443046,
CNS- 1514520, CNS-1564287, and IIS-1816149. Vivek Srikumar thanks
NSF EAGER-1643056 and a gift from Intel corporations.

\bibliography{cited}
\bibliographystyle{plainnat}

\end{document}